%% file: arxiv.tex
\documentclass[twoside,11pt]{article}

\usepackage{blindtext}

\usepackage[utf8]{inputenc}
\usepackage[T1]{fontenc}
\usepackage{lmodern}
\usepackage{hyperref}
\usepackage{url}            
\usepackage{booktabs}       
\usepackage{amsfonts}       
\usepackage[numbers]{natbib}
\usepackage{nicefrac}       
\usepackage{algorithm,algpseudocode}
\usepackage{amsfonts}
\usepackage{multicol}
\usepackage{amsmath}
\usepackage{amsthm}
\usepackage{amssymb}
\usepackage[title]{appendix}
\usepackage{bm}
\usepackage{courier}
\usepackage[usenames,dvipsnames]{color}
\usepackage{enumitem}
\usepackage{graphicx}
\usepackage{float}
\usepackage[margin=0.9in]{geometry}
\usepackage{url}
\usepackage{multirow}
\usepackage[dvipsnames]{xcolor}
\usepackage{epsfig}
\usepackage{graphicx}
\usepackage{subcaption}
\usepackage{wrapfig}   
\usepackage{caption}   

\usepackage[most]{tcolorbox}
\usepackage{xcolor}

\definecolor{boxgray}{RGB}{242,242,242}
\definecolor{boxgold}{RGB}{252,249,240}
\hypersetup{
	colorlinks,
	linkcolor={red!80!black},
	citecolor={blue!50!black},
	urlcolor={blue!80!black}
}

\usepackage{subcaption}
\usepackage{graphicx}
\usepackage{caption}
\title{ \raisebox{-0.2cm}{\includegraphics[width=1cm]{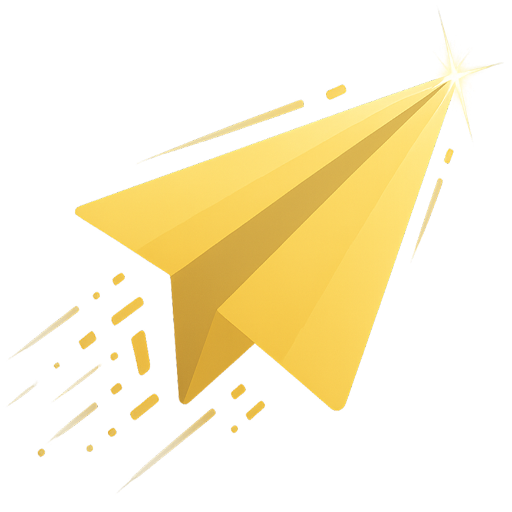}} Inference Time Context Sparsity: Illusion or Opportunity?}

\date{}

\author{
Sahil Joshi$^{\alpha,*}$ \quad
Prithvi Dixit$^{\beta,*}$ \quad
Agniva Chowdhury$^{\alpha}$ \quad
Anshumali Shrivastava$^{\alpha}$ \\[0.3em]
Joseph E. Gonzalez$^{\beta}$ \quad
Ion Stoica$^{\beta}$ \quad
Kumar Krishna Agrawal$^{\beta,\dagger}$ \quad
Aditya Desai$^{\gamma,\dagger}$ \\[0.8em]
$^{\alpha}$ Rice University \quad
$^{\beta}$ UC Berkeley \quad
$^{\gamma}$ IIT Bombay
}

\usepackage{bibentry}

\RequirePackage{amsmath}
\RequirePackage{amsthm}
\RequirePackage{amssymb}

\input{commands}

\usepackage{titlesec}
\titlespacing*{\paragraph}{0pt}{1.2ex plus 0.3ex minus 0.2ex}{0.7em}
\titlespacing*{\section}{0pt}{1.4ex plus 0.4ex minus 0.3ex}{1ex plus 0.2ex}
\titlespacing*{\subsection}{0pt}{1.2ex plus 0.3ex minus 0.2ex}{0.8ex plus 0.2ex}
\begin{document}


\begin{tcolorbox}[
    enhanced,
    colback=boxgold,
    colframe=boxgold,
    boxrule=0pt,
    arc=6pt,
    width=\textwidth,
    left=2pt,right=2pt,top=2pt,bottom=10pt
]


\maketitle

\begin{abstract}
    \input{sections_arxiv/abstract}

\end{abstract}
\end{tcolorbox}

\renewcommand{\thefootnote}{\fnsymbol{footnote}}%
\footnotetext[1]{Equal contribution.}%
\footnotetext[2]{Co-lead: \texttt{kagrawal@berkeley.edu}, \texttt{apdesai@cse.iitb.ac.in}.}%
\renewcommand{\thefootnote}{\arabic{footnote}}%
\setcounter{footnote}{0}%

\section{Introduction}
\input{sections_arxiv/introduction}


\section{Why Dense Attention is not meant for Long Context} \label{sec:theory}
\input{sections_arxiv/theory}

\section{Emergent Sparsity Observed in State-of-the-art Models}
\label{sec:emergent}
\input{sections_arxiv/emergent_sparsity}

\section{Sparsity and Hardware}

\input{sections_arxiv/kernels}

\label{sec:sparsity_hardware}

\section{Conclusion}

\input{sections_arxiv/conclusion}


\bibliographystyle{plain}
{
	\bibliography{bibliography}
}

\appendix
\input{sections_arxiv/theory_app}
\newpage

\end{document}

%% file: commands.tex
\ifx\BlackBox\undefined
\newcommand{\BlackBox}{\rule{1.5ex}{1.5ex}}  
\fi

\ifx\QED\undefined
\def\QED{~\rule[-1pt]{5pt}{5pt}\par\medskip}
\fi

\ifx\proof\undefined
\newenvironment{proof}{\par\noindent{\bf Proof\ }}{\hfill\BlackBox\\[2mm]}
\fi

\ifx\theorem\undefined
\newtheorem{theorem}{Theorem}
\fi

\ifx\example\undefined
\newtheorem{example}{Example}
\fi

\ifx\property\undefined
\newtheorem{property}{Property}
\fi

\ifx\lemma\undefined
\newtheorem{lemma}[theorem]{Lemma}
\fi

\ifx\corollary\undefined
\newtheorem{corollary}[theorem]{Corollary}
\fi

\ifx\proposition\undefined
\newtheorem{proposition}[theorem]{Proposition}
\fi

\ifx\remark\undefined
\newtheorem{remark}[theorem]{Remark}
\fi

\ifx\corollary\undefined
\newtheorem{corollary}[theorem]{Corollary}
\fi

\ifx\definition\undefined
\newtheorem{definition}{Definition}
\fi

\ifx\conjecture\undefined
\newtheorem{conjecture}[theorem]{Conjecture}
\fi

\ifx\axiom\undefined
\newtheorem{axiom}[theorem]{Axiom}
\fi

\ifx\claim\undefined
\newtheorem{claim}[theorem]{Claim}
\fi

\ifx\assumption\undefined
\newtheorem{assumption}[theorem]{Assumption}
\fi



%% file: sections_arxiv/abstract.tex
\noindent
Sparsity has long been a central theme in LLM efficiency, but its role in context processing remains unresolved. As LLM workloads shift toward longer contexts and agentic interactions, the compute and memory bottlenecks of attention become increasingly critical, raising the question of whether these constraints are fundamental.
Our position is that these constraints are artificial and unnecessary, and that the future of LLM inference lies in extreme but principled sparsity along the context dimension. This position is supported by several strands of empirical and theoretical evidence.
First, we find the insistence on dense attention unreasonable, since in  a long context a query effectively projects O(N) attention information into a hidden space of dimension $d \ll N$, making the process inherently lossy.
Second, we perform an extensive study of sparsity in LLMs spanning 20 models across five model families, varying context lengths, and different sparsity levels. We empirically demonstrate a strong trend: current LLMs, despite not being trained for context sparsity, are remarkably robust to inference-time decode sparsity across tasks of varying complexity, including retrieval, multi-hop QA, mathematical reasoning, and agentic coding. 
For instance, Qwen3.5-27B can tolerate up to 100× sparsity on benchmarks such as  RULER-HARD and AIME2025 without loss of quality, and up to 50× sparsity on LOFT and SWE with only a small drop in performance. These results suggest that a transition to complete sparsity may be possible without meaningful loss of capability.
Importantly, we also show that current hardware is already sufficient to realize substantial gains from this sparsity. For example, our sparse decode kernels accelerate large-context processing by up to 10× over FlashInfer at 50× sparsity levels on hardware such as the H100.
Overall, these results position extreme context sparsity not as a heuristic, but as a principled foundation for LLM inference, training, and architecture design: one that is both feasible and beneficial, and a compelling direction for future systems. 
\vspace{0.2cm}

\textbf{Code:} \url{https://github.com/skylight-org/sparse-attention-hub}

\textbf{Project page:} \url{https://sky-light.eecs.berkeley.edu}

%% file: sections_arxiv/introduction.tex
\input{figures_arxiv/teaser}

Sparsity in LLM inference has long been a goal for researchers. Efforts to sparsify the FFN component of Transformers \cite{ chen2020slide, fedusswitch2022, liu2023deja, shazeer2017} have largely converged on Mixture-of-Experts (MoE) as the de facto architecture across frontier labs \cite{agarwal2025gpt, liu2024deepseek, singh2025openai, yang2025qwen3,  zeng2026glm}. In contrast, a similar consensus has yet to emerge for attention mechanisms, or more broadly, context processors. Research on sparsity in attention dates back several years and has recently re-emerged in the LLM era, primarily focusing on emergent sparsity in trained models~\cite{achiam2023gpt, chen2025magicpig, desai2026vattention, desai2025hashattention, hooper2024fast, joshi2026socket, lai2025flexprefill, liu2026retrievalattention, liu2024scissorhands, tang2024quest, yang2024posttraining, zhang2025pqcache, zhang2025spargeattention, zhang2023ho}. However, such sparsity is rarely adopted in state-of-the-art inference engines, highlighting a limited understanding of its practical value. There has been some consolidation of this line of work through the adoption of sparse attention during post-training~\cite{deepseek2025v32, zeng2026glm}. However, demonstrated gains are largely confined to extremely large model regimes, raising questions about their general applicability. In parallel, an alternative form of sparsity has emerged through the development of lightweight context processors, such as linear attention~\cite{choromanski2021rethinking, joshi2026race, katharopoulos2020transformers, peng2021random, zaheer2020bigbird}, and SSMs~\cite{gu2024mamba, lahoti2026mamba, ye2025longmamba, yu2026blockbiased}. While these approaches have seen some adoption, modern architectures still retain full scaled dot-product attention (SDPA) layers, underscoring the limited expressivity of purely SSM-based models. This leads to a fundamental question: are the quadratic compute bottleneck during prefill and the linear memory bottleneck during decoding inherent constraints that are here to stay?
\noindent
This question is becoming increasingly important as LLM workloads shift toward substantially longer contexts and generations. Emerging use cases such as agentic tool use, code generation, retrieval augmented generation, multi document reasoning, long form dialogue, and repository scale software understanding are all driving this trend~\cite{guu2020realm,   hou2024llm4se, izacard2021fid, izacard2022atlas, li2022alphacode, press2022selfask, roziere2023codellama,  ruan2022survey,   schick2023toolformer, touvron2023llama, yao2022react}. To ground this in concrete terms, a single 50 page PDF can contain on the order of 33K tokens. An LLM that aims to condition its responses on corporate or legal documents may therefore need to process hundreds of thousands, if not millions, of tokens within a single context window. At the same time, expectations for state of the art models continue to increase, with each generation pushing toward larger and more persistent context handling. Recent examples further highlight this shift. Systems such as openclaw~\cite{openclaw2026} have been reported to use system prompts reaching 160K tokens\footnote{https://github.com/openclaw/openclaw/issues/21999}, illustrating how rapidly context sizes are expanding in practice. Importantly, these workloads do not merely involve longer prompts, but also often require long form generation over extended histories. This challenge is further amplified in agentic settings, where models iteratively generate outputs, incorporate new information, and repeatedly condition on growing interaction traces among agents. As a result, the question becomes of utmost importance: what does the future of LLM Inference looks like in context dimension?

\noindent
We take the following position: \textbf{\textit{The future of LLM inference lies in extreme sparsity along the context dimension.}} Our position is primarily motivated by our large-scale empirical findings on inference-time sparsity, as well as the substantial speedups we demonstrate over FlashInfer even under sparsity patterns that are arguably highly irregular.

\paragraph{Sparsity already exists in new generation models:} In Section~\ref{sec:emergent} we observe empirically that sparsity is already emerging in modern architectures. Larger and more recent models exhibit remarkable robustness to aggressive context sparsification across a wide spectrum of tasks as shown in Section~\ref{sec:emergent}. This holds for relatively simpler benchmarks such as RULER-HARD, a challenging subset of RULER~\cite{hsieh2024ruler}, and LOFT~\cite{Lee2024Loft}, as well as for more complex reasoning tasks such as AIME~\cite{dekoninck2026matharena} and real-world agentic workloads such as SWE~\cite{jimenez2023swebench} (50+ agentic turns). To our knowledge, this is the first study to examine inference-time sparsity on an agentic workload. Of course, inducing sparsity during training is ideal, but the emergence of sparsity at inference time reinforces the possibility that models can operate effectively in highly sparse regimes and provides greater confidence in designing training procedures that explicitly target sparsity.
\noindent
We empirically evaluate five model families, including Llama3\cite{dubey2024llama}, Qwen2.5\cite{qwen2025qwen25technicalreport}, Qwen3.5\cite{qwen35blog}, Gemma3\cite{gemmateam2025gemma3technicalreport}, and Ministral3\cite{liu2026ministral}, across four tasks: RULER-HARD (32K), LOFT (32K and 128K), AIME (65K generation length), and SWE (50+ agentic turns). Our results show that the effectiveness of inference-time sparsity improves with both model scale and the use of hybrid architectures. In particular, larger models from newer-generation families such as Qwen3.5, Gemma3, and Ministral3 sustain quality parity with dense execution even at 50$\times$ sparsity. For smaller standard models, the observed degradation can largely be mitigated through stochastic index selection \cite{desai2026vattention}. Notably, these gains are achieved purely at inference time, suggesting that incorporating sparsity during training could yield even greater benefits.

\paragraph{Sparsity can be leveraged for system gains:}

The alignment of sparsity with hardware is of paramount importance for fully realizing its benefits. It is therefore essential to evaluate whether sparsity actually alleviates the underlying bottlenecks. A common belief is that efficiency gains require block-structured sparsity~\cite{zhu2025an}; without it, sparsity is unlikely to translate into real speedups. We present an argument against this view.
\noindent
Notably, DeepSeek Attention~\cite{deepseek2025v32} demonstrates both training- and inference-time speedups with token-level sparsity during prefill. This serves as evidence that token-level sparsity can indeed translate into practical efficiency gains without imposing excessive structure on the model. We extend this further to decoding. For decoding, we provide kernels that accelerate inference under per-token, per-query, and per-head sparsity, i.e., highly irregular sparsity patterns. These gains persist even under grouped query attention, where the number of query heads can be a factor (typically 4) larger than the number of key-value heads. This is possible because the KV cache’s vector dimension provides sufficient contiguous memory to make such sparsity effective on modern hardware. In particular, our optimized sparse attention kernel, built on top of FlashInfer with a paged KV-cache backend, achieves up to $10\times$ kernel speedup at $50\times$ sparsity for large batch sizes.

\paragraph{Non-existence of a truly dense attention:} 
Apart from the empirically strong results, our position and research is strongly rooted in the idea that dense attention is incompatible with long context. We show a simple result that truly dense attention does not exist in practice: full attention is ultimately bottlenecked by the hidden dimension, causing it to collapse unable to distinguish between varying attention distributions. While the result itself is straightforward, the implications are significant. It suggests that complete sparsity is not merely a practical approximation but principally a superior objective.
\noindent
Overall, this paper argues that the community should actively explore extreme sparsity along the context dimension without compromises such as partially retaining full attention layers. The empirical emergence of sparsity across multiple axes highlights strong potential, and decode-time sparse kernel analyses in Section~\ref{sec:emergent}, augmented by results from recent DeepSeek models, demonstrate the significant efficiency gains that can be unlocked.

%% file: figures_arxiv/teaser.tex
\begin{figure}[t]
    \centering
    \begin{subfigure}{0.55\textwidth}
        \centering
        \includegraphics[width=\linewidth]{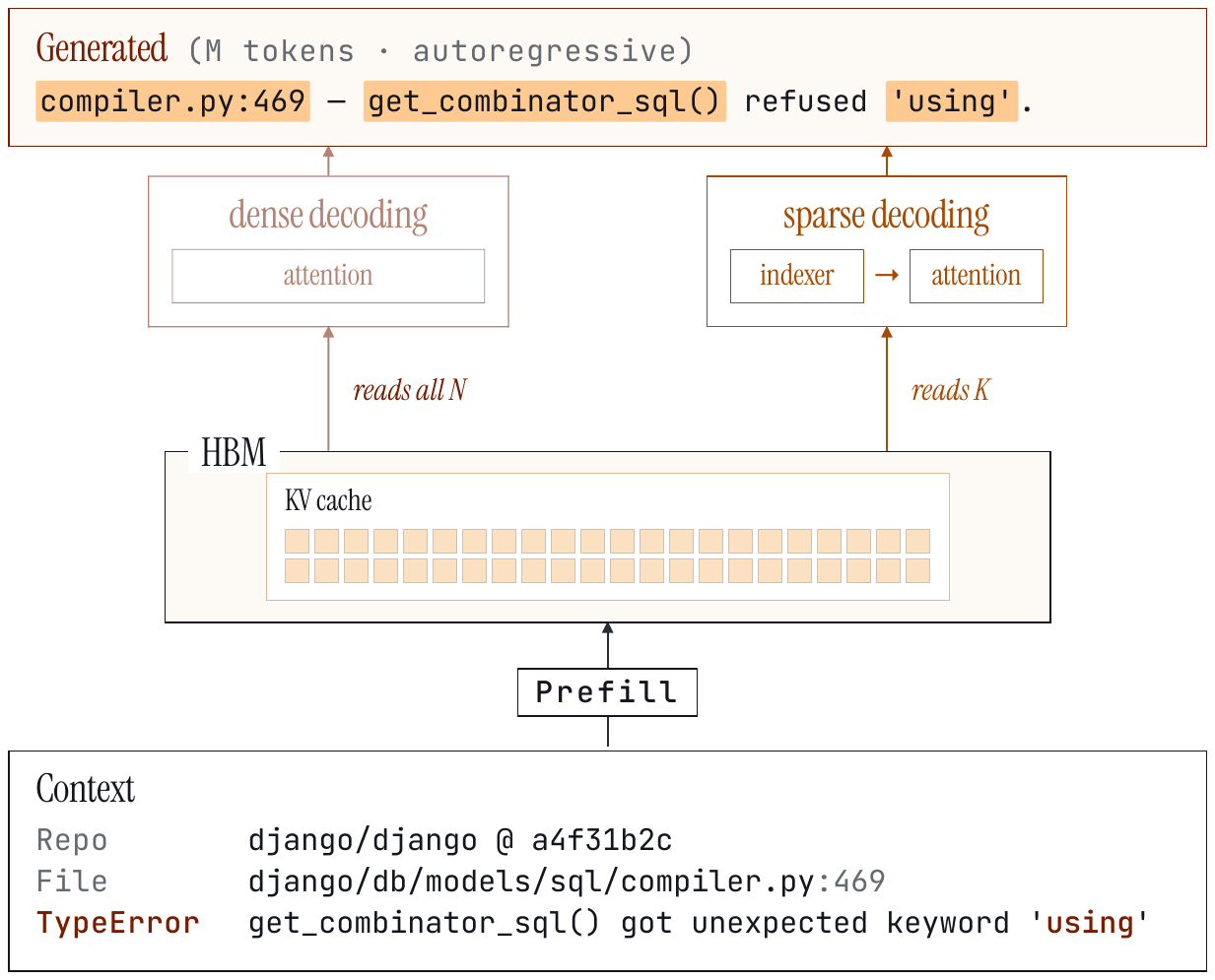}
        \caption*{\footnotesize\textbf{(a)} Inference-time attention I/O across decode regimes.}
    \end{subfigure}%
    \begin{subfigure}{0.45\textwidth}
        \centering
        \includegraphics[width=\linewidth]{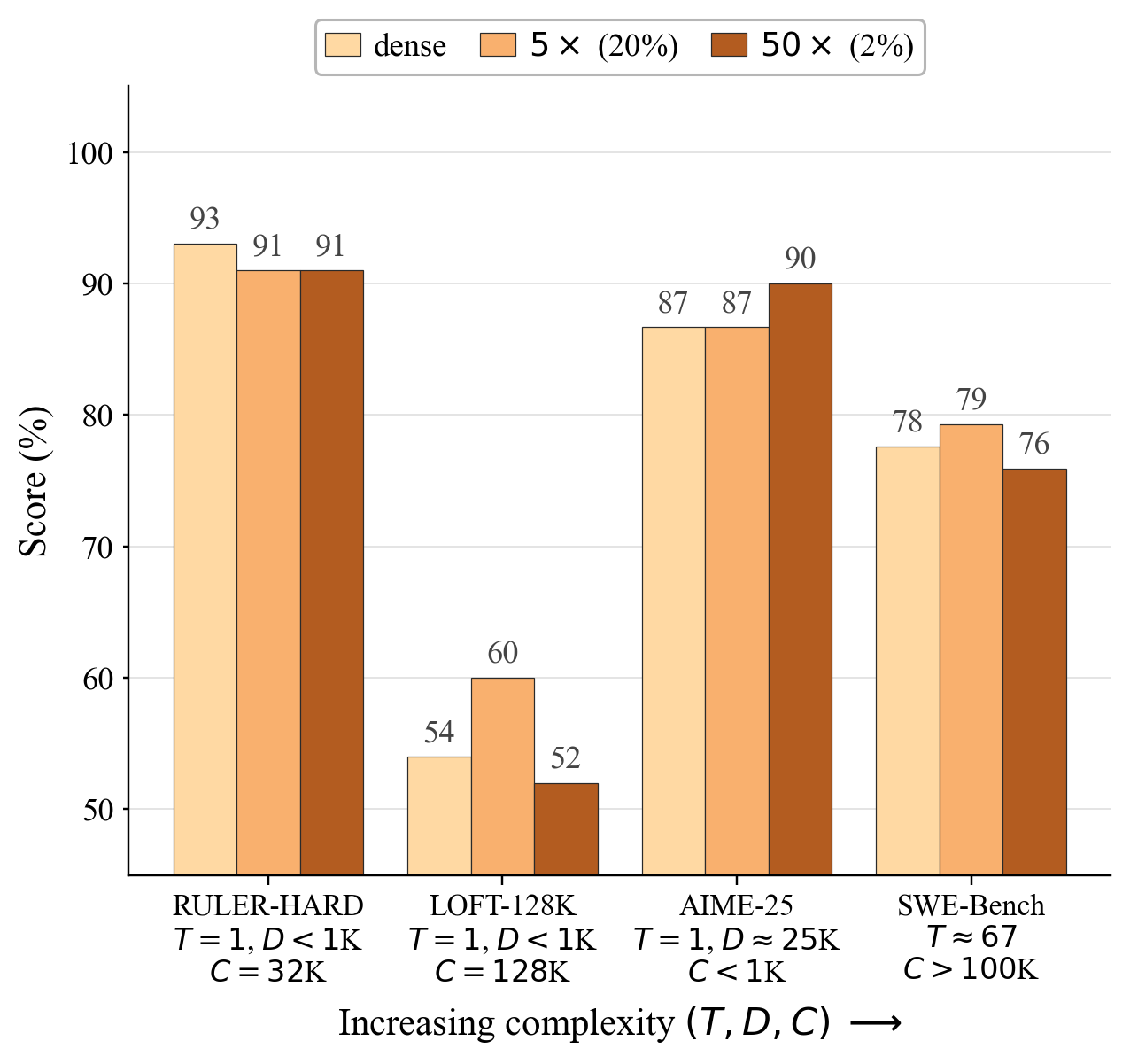}
        \caption*{\footnotesize\textbf{(b)} Qwen3.5-27B across workloads.}
    \end{subfigure}
    \caption{\textbf{Inference-time $50\times$ context sparsity is bandwidth-friendly by construction (a) and retains near-dense quality across diverse workloads on a single model (b).}
    \textbf{(a)} Three decode regimes share an HBM band but read from it differently. \emph{Dense} reads the full KV cache every step ($\mathcal{O}(N{\cdot}d)$ bytes); \emph{Sparse} routes through a lightweight indexer that selects $k$ rows ($\mathcal{O}(k{\cdot}d)$ bytes, $k{\ll}N$); \emph{Linear} (Gated DeltaNet) reads a fixed-size recurrent state $S$ ($\mathcal{O}(d^{2})$ bytes, constant in $N$). All three are memory-bandwidth bound on H100/B200; the contrast is whether per-step traffic scales with context length.
    \textbf{(b)} Qwen3.5-27B on four workloads ordered by increasing complexity along $(T,D,C)$ -- turns, decoded tokens per turn, input context per turn (\S\ref{sec:emergent}). Configurations: \textbf{RULER-HARD-32K} $(T{=}1,\ D{<}1$K$,\ C{=}32$K$)$; \textbf{LOFT-128K} $(T{=}1,\ D{<}1$K$,\ C{=}128$K$)$; \textbf{AIME-2025} $(T{=}1,\ D{\approx}25$K$,\ C{<}1$K$)$; \textbf{SWE-Bench Django} $(T{\approx}67,\ D{\sim}1$K per turn$,\ C$ grows to ${>}100$K$)$. Sparse within $\leq\!2$ points of dense at $50\times$ on retrieval (RULER, LOFT), reasoning (AIME), and agentic coding (SWE-Bench $S_3$ subset, $n{=}58$, App.~\ref{app:swe}).
    Kernel-level speedup on H100/B200 is deferred to \S\ref{sec:sparsity_hardware} and Tab.~\ref{tab:backend_speedup}.}
    \label{fig:teaser}
\end{figure}

%% file: sections_arxiv/theory.tex
We begin by examining whether attention layers can remain truly dense as context length grows, and show that they cannot. 

\subsection{Dense Attention Collapses Through The Hidden Dimension.}

While sparse attention and SSMs have their own shortcomings, fully dense attention has its own bottleneck: it assigns a weight to every context token, but the layer passes forward only a fixed-dimensional hidden vector.

\begin{theorem}
\label{thm:dense-attention-hidden-collapse}
Let \(V\in\mathbb{R}^{N\times d}\) be any value matrix, and let the dense attention output be
$o=V^\top a$,
where \(a=(a_1,\dots a_N)\) is an attention distribution over \(N\) context tokens. If \(d<N-1\), then this map is not injective on the attention simplex. In particular, there exist two distinct dense attention distributions \(a,a'\) such that
$a\neq a'$,
and
$V^\top a = V^\top a'$.
Thus, a \(d\)-dimensional post-attention embedding cannot preserve all dense attention-score variations over more than \(d+1\) context tokens.
\end{theorem}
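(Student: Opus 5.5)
The plan is a dimension-counting argument: consider the affine map $\Phi(a)=V^\top a$ restricted to the simplex $\Delta_N=\{a\in\mathbb{R}^N : a_i\ge 0,\ \mathbf{1}^\top a=1\}$. I will find a nonzero direction $u$ that changes nothing the map can see and keeps us on the simplex.

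First, form the augmented matrix
\[
M=\begin{pmatrix} V^\top \\ \mathbf{1}^\top\end{pmatrix}\in\mathbb{R}^{(d+1)\times N}.
\]
Its rank is at most $d+1$. Since $d<N-1$ means $d+1<N$, rank--nullity gives a nonzero $u\in\mathbb{R}^N$ with $Mu=0$. Equivalently, $V^\top u=0$ and $\sum_i u_i=0$.

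Second, take $a$ to be the uniform distribution, $a=\frac1N\mathbf{1}$. It lies in the relative interior of the simplex and is genuinely dense, since every entry is positive. Set $a'=a+\varepsilon u$ with $0<\varepsilon<\frac{1}{N\|u\|_\infty}$. Then every entry satisfies $a'_i\ge \frac1N-\varepsilon|u_i|>0$, and $\sum_i a'_i=1$ because $\mathbf{1}^\top u=0$. So $a'$ is also a dense attention distribution. We have $a'\neq a$ because $u\neq0$ and $\varepsilon>0$, and
\[
V^\top a'=V^\top a+\varepsilon V^\top u=V^\top a.
\]
This proves non-injectivity on the simplex.

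The final sentence of the theorem is the contrapositive reading. Whenever more than $d+1$ tokens carry attention mass, there are distinct dense attention patterns, indeed a whole segment of them through any interior point, that produce the same $d$-dimensional output. No post-attention embedding can therefore distinguish them.

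The only step needing care is ensuring the perturbed vector stays a valid distribution. The sum-to-one constraint is what costs one dimension and explains the threshold $d+1$ rather than $d$. It is handled by including $\mathbf{1}^\top$ in the kernel computation and by working at an interior point, so that small perturbations preserve nonnegativity. Otherwise the argument is routine linear algebra.
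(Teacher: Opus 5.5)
Your proof is correct and is essentially the paper's argument. Stacking $V^\top$ with $\mathbf{1}^\top$ and applying rank--nullity is equivalent to the paper's intersection of the null space of $V^\top$ with the zero-sum hyperplane, and both give a direction space of dimension at least $N-d-1>0$; perturbing the uniform distribution along that direction matches the paper's construction, which differs only cosmetically by using the symmetric pair $a_0\pm\frac{\beta}{N}z$ with $\|z\|_\infty=1$.
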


\begin{corollary}
\label{cor:million-token-hidden-width}
If all dense attention distributions over \(N\) context tokens must remain distinguishable after the attention output \(V^\top a\), then the hidden dimension must satisfy \(d\ge N-1\). Thus, losslessly preserving arbitrary dense attention over a million-token context requires million-scale hidden width.
\end{corollary}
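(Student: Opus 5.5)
The statement immediately above is Corollary~\ref{cor:million-token-hidden-width}, and I would derive it as the contrapositive of Theorem~\ref{thm:dense-attention-hidden-collapse}. So I first make sure the theorem is solid. The map $a\mapsto V^\top a$ is the restriction of a linear map $\mathbb{R}^N\to\mathbb{R}^d$ to the simplex $\Delta_{N-1}$. The kernel of the linear map $a\mapsto (V^\top a,\ \mathbf{1}^\top a)$, which goes from $\mathbb{R}^N$ to $\mathbb{R}^{d+1}$, has dimension at least $N-d-1$. This is at least $1$ precisely when $d\le N-2$, i.e.\ $d<N-1$. Pick a nonzero $u$ in this kernel, so $V^\top u=0$ and $\sum_i u_i=0$.

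Next I take $a$ to be the uniform distribution $a_i=1/N$, which is fully dense with every coordinate strictly positive. I set $a'=a+\varepsilon u$ with $0<\varepsilon<1/(N\|u\|_\infty)$. Then $a'$ has strictly positive entries, sums to $1$, and satisfies $a'\neq a$ and $V^\top a'=V^\top a$. This produces two distinct \emph{dense} distributions with the same output, as the theorem claims.

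For the corollary, suppose $d<N-1$. The theorem then gives distinct $a,a'$ that cannot be distinguished after $V^\top a$, contradicting the hypothesis. Hence $d\ge N-1$. Specializing to $N=10^6$ gives $d\ge 10^6-1$, which is million-scale width.

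I would also remark that the bound is tight in the sense of Radon/affine independence. If $d\ge N-1$, one can choose $V$ whose rows are affinely independent points, for example $0$ together with the standard basis vectors of $\mathbb{R}^{N-1}$. Barycentric coordinates are then unique, so the map is injective. This confirms that $N-1$, not $N$, is the right threshold. I would mention this only briefly, since the corollary asserts only necessity.

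The main subtlety is quantifier scope, not any technical computation. The corollary reads ``for the given $V$,'' and the theorem's collapse holds for every $V$, so necessity holds for any value matrix whatsoever. A second point is that I want the colliding pair to lie in the interior of the simplex, so that the conclusion genuinely concerns dense attention rather than sparse vertices. The perturbation-around-uniform step handles that.
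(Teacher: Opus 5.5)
Your proposal is correct and follows the paper's argument: a dimension count gives a nonzero zero-sum vector in the null space of $V^\top$, and perturbing the uniform distribution along it yields two distinct full-support distributions with the same output; the corollary is then the contrapositive. The only differences are cosmetic (you stack $V^\top$ and $\mathbf{1}^\top$ into one map instead of intersecting $\mathcal{N}(T)$ with the zero-sum hyperplane, and you perturb one-sidedly rather than by $\pm$), and your tightness remark via affinely independent rows of $V$ is correct but is not in the paper.
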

\noindent
This is the embedding bottleneck. When \(d \ll N\), the map \(a \mapsto V^\top a\) collapses many distinct attention patterns into the same hidden representation; hence not every fine-grained variation in a dense attention distribution can be carried forward by the post-attention vector. This conclusion is consistent with lower bounds for indexed lookup: finite-precision recurrent models, including RNNs, LSTMs, state-space models, and recurrent linear attention, require hidden-state size \(\Omega(N)\) to recover arbitrary tokens from an \(N\)-token sequence~\cite{bhattamishra2024separations}. These observations motivate a complete context-sparse attention.
\input{figures_arxiv/ruler1}

%% file: figures_arxiv/ruler1.tex
\begin{figure}[!t]
    \centering
    \includegraphics[width=\linewidth]{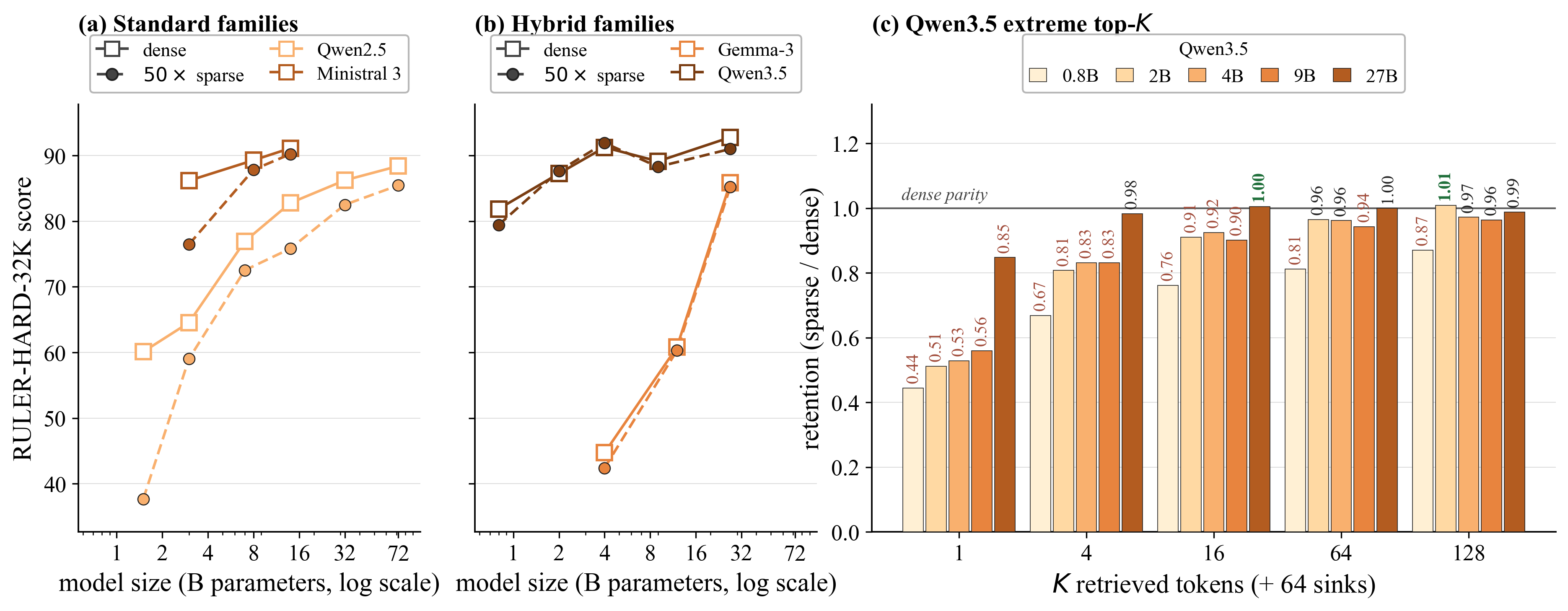}
    \caption{\textbf{RULER-HARD-32K score across families, scales, and sparsity.}
    Scores are absolute (averaged over the six RULER-HARD subtasks). Squares mark dense ($1\times$); filled circles mark sparse ($5\times$ and $50\times$ oracle top-$k$). Each line is one checkpoint; lighter shade within a family means smaller scale. Panels (a) and (b) share the x-axis.
    \textbf{(a)~Standard families.} Qwen2.5 (blue) and Ministral~3 (amber) fan out as sparsity grows: Qwen2.5-1.5B drops from $60$ at dense to $38$ at $50\times$, while Qwen2.5-72B holds within $3$~points across the full sweep.
    \textbf{(b)~Hybrid families.} Gemma-3 (green, sliding-window) and Qwen3.5 (red, linear-attention) stay essentially flat at every scale; the small-checkpoint penalty visible in (a) is absent.
    \textbf{(c)~Qwen3.5 saturation under extreme top-$K$.} Retention $=$ score$_K /$ score$_{\text{dense}}$, computed per subtask and then averaged. $K$ retrieved tokens with $64$ sinks always retained, $K\!\in\!\{1,4,16,64,128\}$. Larger Qwen3.5 scales saturate earlier (27B reaches near-parity by $K{=}4$); 0.8B is still climbing at $K{=}128$ (0.87 at the right edge)~--- the knee migrates with scale.
    Three Qwen3.5-27B cells in the parsed CSV were unevaluated and imputed at ceiling continuation of adjacent rates (\texttt{nm3@5$\times$}, \texttt{nm2@10$\times$}~$\to 100$; \texttt{qa2@20$\times$}~$\to 63$); effect on the 6-subtask average is $<\!1$ point. LLaMA-3 omitted: no per-subtask RULER-HARD data in the current eval set.}
    \label{fig:ruler1}
\end{figure}

%% file: sections_arxiv/emergent_sparsity.tex
We now turn to empirical evidence showing that even under current training recipes and latest architectures, where attention is not explicitly trained to be sparse, sparsity emerges regardless. This reinforces our proposal: it suggests that in principle, models can transition to extremely sparse context processing without any loss in capability. We adopt the following evaluation setup.
\\
\noindent
\textbf{Algorithms:} We enforce sparse attention at decode time and measure the effect across models, architectures, and tasks. The primary mechanism is exact \emph{oracle} top-$k$ selection (to eliminate confounds from approximate top-$k$ indexers); in selected experiments we also report stochastic indexing via vAttention~\cite{desai2026vattention}.
\\
\noindent
\textbf{Datasets:} For long-context retrieval, we report the average score on RULER-32K-HARD, a challenging subset consisting of six tasks from RULER-32K~\cite{hsieh2024ruler} (\texttt{fwe}, \texttt{qa1}, \texttt{qa2}, \texttt{vt}, \texttt{nm2}, and \texttt{nm3}). On the LOFT~\cite{Lee2024Loft} benchmark, we evaluate long-context performance by averaging results over five datasets (\texttt{hotpotqa}, \texttt{nq}, \texttt{musique}, \texttt{qampari}, and \texttt{quest}) at both 32K and 128K context lengths. To assess sparsity in long-form generation settings, we evaluate the largest hybrid model, Qwen3.5-27B~\cite{qwen35blog}, on AIME2025~\cite{dekoninck2026matharena}, where we limit the generation length to 65K tokens. For agentic workloads, we evaluate Qwen3.5-27B on the SWE-Bench Django~\cite{jimenez2023swebench} subset ($114$ tasks, $\le 100$ turns per task) at three sparsity levels: dense, $5\times$ sparsity, and $50\times$ sparsity. To our knowledge this is the first inference-time sparsity study on an agentic benchmark.
\\
\noindent
\textbf{Models.} We investigate five families, with a total of 20 models: three standard transformers (Qwen2.5~\cite{qwen2025qwen25technicalreport}, Ministral3~\cite{liu2026ministral}, and Llama3~\cite{dubey2024llama}) and two hybrids (Qwen3.5~\cite{qwen35blog} and Gemma3\cite{gemmateam2025gemma3technicalreport}), enabling a controlled comparison of sparsity behavior across architectural variants.

\input{figures_arxiv/ruler3}

\paragraph{RULER-32K(Hard Subset)} 
On the hard subset of RULER, we observe two consistent trends across model families in Figure~\ref{fig:ruler1}. First, hybrid architectures such as Qwen3.5 and Gemma3 exhibit significantly greater robustness to context sparsity, maintaining performance even at up to 50× sparsity with little to no degradation in quality. Interestingly, this robustness appears largely invariant to model scale within these families, suggesting that the inclusion of SSM or linear-attention layers may inherently improve resilience to sparse context retrieval.
\\
\noindent
We further evaluate the Qwen3.5 (See Figure~\ref{fig:ruler1}) family of hybrid models under extreme context sparsity, where the model is restricted to using only a very small number of retrieved tokens (1–128 tokens). Even in these highly constrained regimes, larger hybrid models exhibit striking robustness to sparsity. In particular, restricting attention to just 128 retrieved tokens corresponds to approximately 250× sparsity while still preserving strong performance. Given the broader trajectory of state-of-the-art LLMs toward larger parameter scales and increasingly hybrid architectures, these findings suggest that future inference systems may rely more heavily on extremely sparse context processing rather than dense attention across the entire context window.
\\
\noindent
Second, for standard dense-attention architectures such as Llama3, Qwen2.5, and Ministral3, robustness to sparsity improves steadily with model size as seen in Figure~\ref{fig:ruler1}. In particular, the largest models in these families are able to preserve quality even under 50× sparsity. This suggests that scaling alone may enable stronger implicit retrieval and context localization capabilities, even in the absence of explicit architectural mechanisms for sparse processing.
\noindent
For smaller standard models, the performance of top-$k$ can be significantly lower than dense model (see Figure~\ref{fig:ruler1}). A common explanation for the failure of top-$k$ sparse attention is that attention mass is often diffusely distributed across the context, making deterministic selection of small number of tokens ineffective. However, this does not necessarily imply that attention itself must remain dense; rather, it points to the need for more effective context selection mechanisms. As shown in Figure~\ref{fig:ruler3}, we find that stochastic index-selection approaches such as vAttention are able to nearly recover the quality of full attention while still operating under extreme sparsity in the context dimension (up to 50× sparsity in our experiments). These results suggest that the core limitation of conventional top-$k$ sparsification may stem less from sparsity itself and more from the deterministic and locality-biased nature of the selection process.

\input{figures_arxiv/loft}

\paragraph{LOFT-32K and LOFT-128K}

To evaluate context sparsity beyond synthetic retrieval settings, we additionally study LOFT, a more natural retrieval and question-answering benchmark. LOFT appears to be substantially more challenging than RULER-HARD, with even the dense baseline models achieving relatively modest performance. Despite this increased difficulty, the qualitative trends with respect to sparsity remain largely consistent with those observed on RULER-HARD.
\\
\noindent
To further study the effect of sequence length, we evaluate LOFT under two context regimes: 32K and 128K tokens. Across both settings, robustness to sparsity remains remarkably stable. Intuitively, one might expect longer contexts to naturally induce greater effective sparsity, especially for retrieval-oriented tasks where only a small subset of tokens should be relevant to the query. However, we do not observe a corresponding increase in inference-time sparsity as context length grows. This suggests that current models may not automatically adapt their retrieval behavior with increasing context size, pointing to potential opportunities during training to explicitly encourage sparsity to scale with context length.

\paragraph{AIME25}
We use AIME 2025 to evaluate the effect of context sparsity on long-form autoregressive generation. While we permit generations of up to 65K tokens, models produce roughly 25K tokens on average across evaluation samples, making this setting particularly sensitive to accumulated approximation errors. Sparse attention introduces perturbations at every embedding update, raising the concern that such errors may compound not only across layers, but also across thousands of autoregressively generated tokens.
\\
\noindent
Despite these concerns, the results on AIME demonstrate remarkable robustness to aggressive sparsification. Even over extremely long generations, model quality remains largely stable, suggesting that sparsity-induced approximation errors do not significantly destabilize long-horizon reasoning or autoregressive decoding dynamics. Furthermore, we observe that increasing sparsity results in only a marginal increase in the average number of generated tokens required to solve a task. This is an important practical observation: it indicates that improvements in per-token decoding efficiency can translate into genuine end-to-end reductions in task completion time, rather than being offset by substantially longer generations.

\begin{figure}
    \centering
    \begin{subfigure}{0.5\textwidth}
        \centering
        \includegraphics[width=\linewidth]{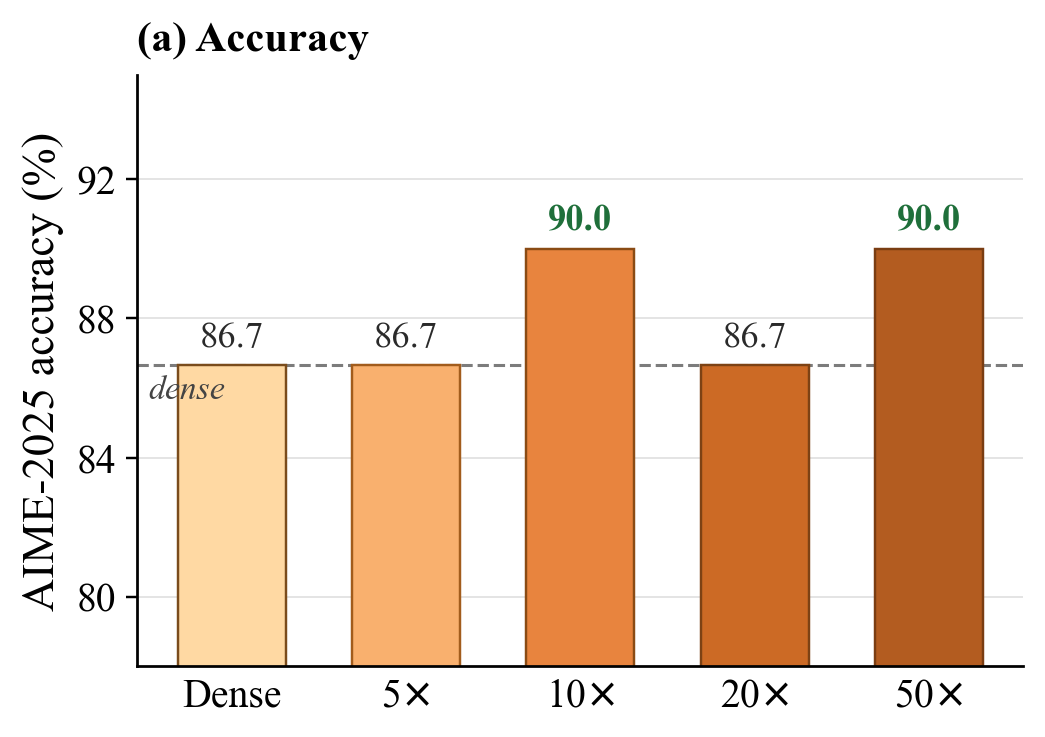}
    \end{subfigure}\hfill
    \begin{subfigure}{0.5\textwidth}
        \centering
        \includegraphics[width=\linewidth]{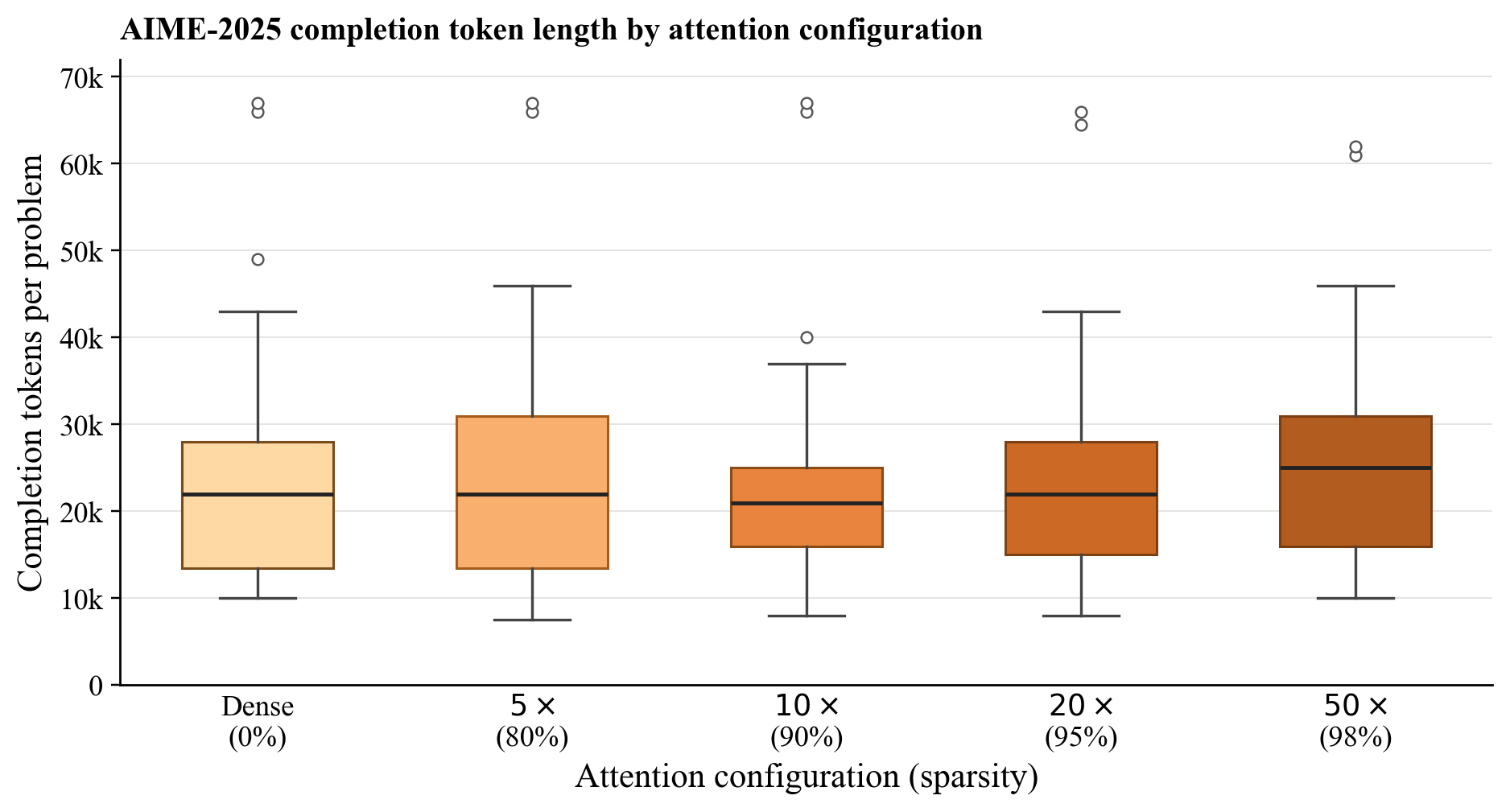}
    \end{subfigure}
    \caption{We use AIME 2025 for evaluating long-form generation. Although we allow generations up to 65K tokens, models generate approximately 25K tokens on average across samples. Since sparse attention introduces approximation errors at each embedding update, an important concern is whether these errors compound across layers and, more critically, across autoregressively generated tokens. The results on AIME are therefore particularly promising: they indicate that models remain robust to sparsity-induced errors even over very long generations, suggesting that aggressive context sparsification does not necessarily destabilize long-horizon reasoning or autoregressive decoding. Additionally, we observe that increasing sparsity leads to only a slight increase in the average number of generated tokens per task. This suggests that the gains observed in decoding speed can translate into meaningful end-to-end task completion speedups in practice.}
    \label{fig:aime}
\end{figure}


\paragraph{SWE-Bench (Django)}
\input{figures_arxiv/swe}
SWE-bench~\cite{jimenez2023swebench} is a benchmark of $2{,}294$ real-world GitHub issues drawn from $12$ popular Python repositories. Each task instance pairs an \emph{issue} (a bug report or feature request, $195$ words on average) with the corresponding repository snapshot at the base commit (codebases average $3{,}010$ non-test files and $438$K lines of code) and a set of \emph{fail-to-pass} tests; the task is to produce a patch that resolves the issue and passes the tests. Gold patches average $32.8$ lines spread over $1.7$ files and $3$ functions, so each instance reduces to identifying a small set of edits within a large body of context. We evaluate on the $114$-task Django subset of SWE-Bench Lite, the curated variant of the original benchmark that biases toward single-file gold patches.
\\
\noindent
\textbf{Agentic compute profile:} We run Qwen3.5-27B as a tool-using coding agent inside the \texttt{mini-swe-agent} harness, with a limit of $250$ turns per task. At every step, the agent sees its complete conversation history (issue text, prior tool calls, and their stdout/stderr), emits one new tool invocation, and the harness appends the tool output to the next prompt. Consequently, the effective context length grows monotonically from $\sim$$1$K tokens of issue text and repository metadata at turn one, to $\sim$$20$K tokens by turn $60$ on a typical resolved trajectory, and exceeds $100$K tokens on the tail of trajectories that hit the step limit.  We compare three densities on the Django subset: dense, $5\times$ (20\%) sparsity, and $50\times$ ($2\%$) sparsity. To our knowledge this is the first inference-time sparsity study on an agentic benchmark.
\\
\noindent
\textbf{Resolution rate:} A clean three-way comparison requires controlling for runtime errors (e.g., timeouts, server errors) that can bury an otherwise competent run as an empty patch. We report two head-to-head subsets (Figure~\ref{fig:swe}, left): the strict $n{=}58$ subset where all three configurations produced a valid evaluation verdict, and a broader $n{=}70$ subset that also admits tasks where the agent exhausted its $250$-turn budget (step-limit hits, an agent-side failure mode shared across configs; see Appendix~\ref{app:swe}). On the valid-eval subset the three configurations resolve within $\sim$$2$ points of each other ($77.6\%$ dense, $79.3\%$ at $5\times$, $75.9\%$ at $50\times$): on tasks that all three configurations finish cleanly, sparse matches dense. On the step-limit-inclusive subset the gap widens to $\sim$$4$ points ($71.4\%$ dense vs $67.1\%$ at both sparse settings) because sparse runs collapse into degenerate command loops slightly more often, exhausting the turn budget without submitting. The full subgroup analysis is in Appendix~\ref{app:swe}.
\\
\noindent
\textbf{Why we report a strict subset (error analysis):} A non-trivial fraction of trajectories never enter the patch-quality evaluation because the agent never emits a parseable diff (\texttt{empty\_patch}), or emits one the harness cannot apply (\texttt{error}). These are not attention-quality signal, so we attribute them to a root cause before deciding what to drop. Dense's $13$ \texttt{empty\_patch} cases are $9$ \texttt{CalledProcessError} (the per-instance \texttt{docker run} returned exit $125/127$ \emph{before} the agent could start: a daemon-side launch failure, not a model failure) plus $4$ \texttt{LimitsExceeded} (the agent ran but hit the $250$-turn cap without submitting). Sparse \texttt{empty\_patch} is bigger ($34$ at $5\times$, $26$ at $50\times$) and has a different root cause: it is dominated by \texttt{InternalServerError} ($17$ and $11$, respectively) and \texttt{Timeout} ($2$ and $4$), both of which are the vLLM server crashing or stalling mid-trajectory under the sparse-attention-hub backend, an engineering instability of the serving stack rather than a property of sparse attention itself. The remaining sparse \texttt{empty\_patch} cases are \texttt{LimitsExceeded} ($11$ and $11$); $50\times$ additionally has $3$ \texttt{error} cases where the produced patch was rejected by \texttt{git apply}. Because these failure modes either pre-date the model call (\texttt{CalledProcessError}) or are mid-call infrastructure crashes (\texttt{InternalServerError}, \texttt{Timeout}), they bury an otherwise competent run as a zero, and we drop them before comparing patch quality. The $n{=}58$ subset is the strict residual where all three configurations emitted a non-empty patch that the harness scored; the $n{=}70$ subset additionally admits \texttt{LimitsExceeded} on the rationale that turn-budget exhaustion is a shared, model-side failure mode. The full per-config outcome$\times$exit-status breakdown is in Appendix~\ref{app:swe}.

\input{figures/table_sparse_backend}

\noindent
Together, these experiments probe sparsity across five key axes, yielding the following generic takeaways:

\begin{itemize}[leftmargin=*]

\item \textbf{Scale:} Performance under top-$k$ sparsity improves with model size; the gap between sparse and dense decoding largely closes at scale.

\item \textbf{Architecture:} Hybrid models tolerate sparsity better than standard transformers. Notably, at high sparsity levels (e.g., $50\times$ reduction), hybrid models maintain strong performance. In fact for larger hybrid model Qwen3.5-27B, even 16-32 tokens in top-$k$ are enough to achieve parity with the dense model on RULER-HARD.

\item \textbf{Context Length:} The qualitative behavior of sparsity remains consistent across context lengths. Observations at 32K and 128K contexts largely align, indicating that sparsity properties generalize to long-context settings without introducing additional degradation. A case can be made for more sparsity in longer contexts and it seems like it would need training time modifications to achieve it. 

\item \textbf{Algorithm:} 
Generally Hybrid models and large models with standard architecture show good results with top-$k$ sparsity. However, for smaller models with standard architecture, top-$k$ sparsity may not suffice. In such a case, stochastic sparsity with methods such as vAttention can still enable sparsity in attention while maintaining quality.

\item \textbf{Task complexity:} Qwen3.5-27B robustness to sparsity holds across single-hop retrieval and multi-hop QA (RULER-HARD, LOFT), mathematical reasoning (AIME~2025), and agentic coding (SWE-Bench Django). This evaluation showcases that even inference time sparsity does not deteriorate the capability of models in many real world long context applications.

\end{itemize}
\input{figures/table_double_sparsty}

%% file: figures_arxiv/ruler3.tex
\begin{figure}[t]
    \centering
    \includegraphics[width=0.78\textwidth]{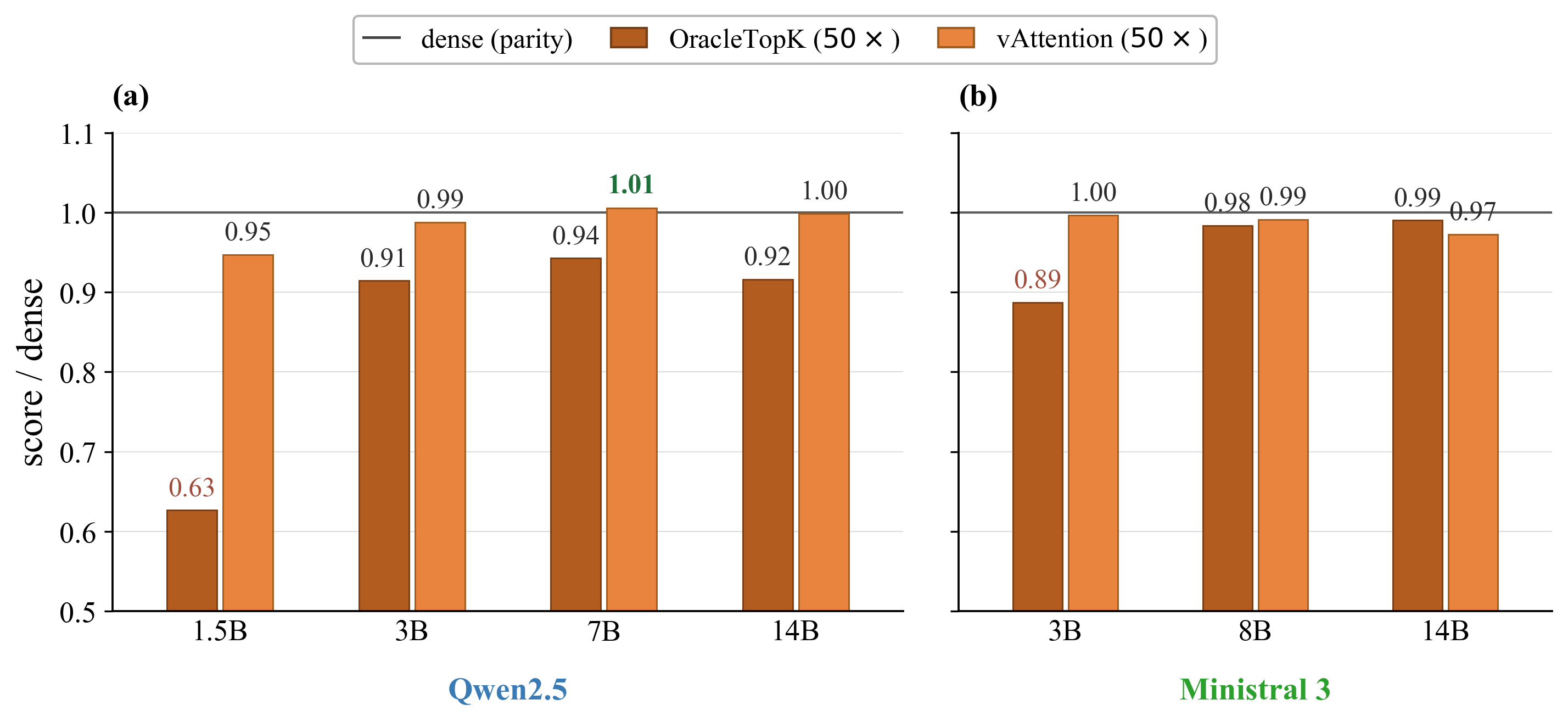}
    
    \caption{\textbf{vAttention vs OracleTopK retention at $50\times$ sparsity on RULER-HARD-32K.} Bars report relative score (sparse / dense); the horizontal line at $1.0$ marks dense parity. Values $\geq\!1.0$ are bold green; values $<\!0.90$ are muted red. Panels separate model family ((a) Qwen2.5, (b) Ministral 3). Traditionally, the failure of top-$k$ sparse attention has been attributed to the diffusion of attention scores across the context. However, this does not necessarily imply that attention must remain dense; rather, it suggests the need for a different mechanism for selecting relevant context. We observe that stochastic index-selection (vAttention) closely tracks dense parity at every Qwen2.5 and Ministral checkpoint, while deterministic OracleTopK collapses on smaller standard models (Qwen2.5-1.5B drops to $\sim$$0.63$ of dense). This suggests that the primary limitation of conventional top-$k$ sparsification lies not in sparsity itself, but in the determinism and locality of the selection mechanism.}
    \label{fig:ruler3}
\end{figure}

%% file: figures_arxiv/loft.tex
\begin{figure}[t]
    \centering
    \includegraphics[width=\textwidth]{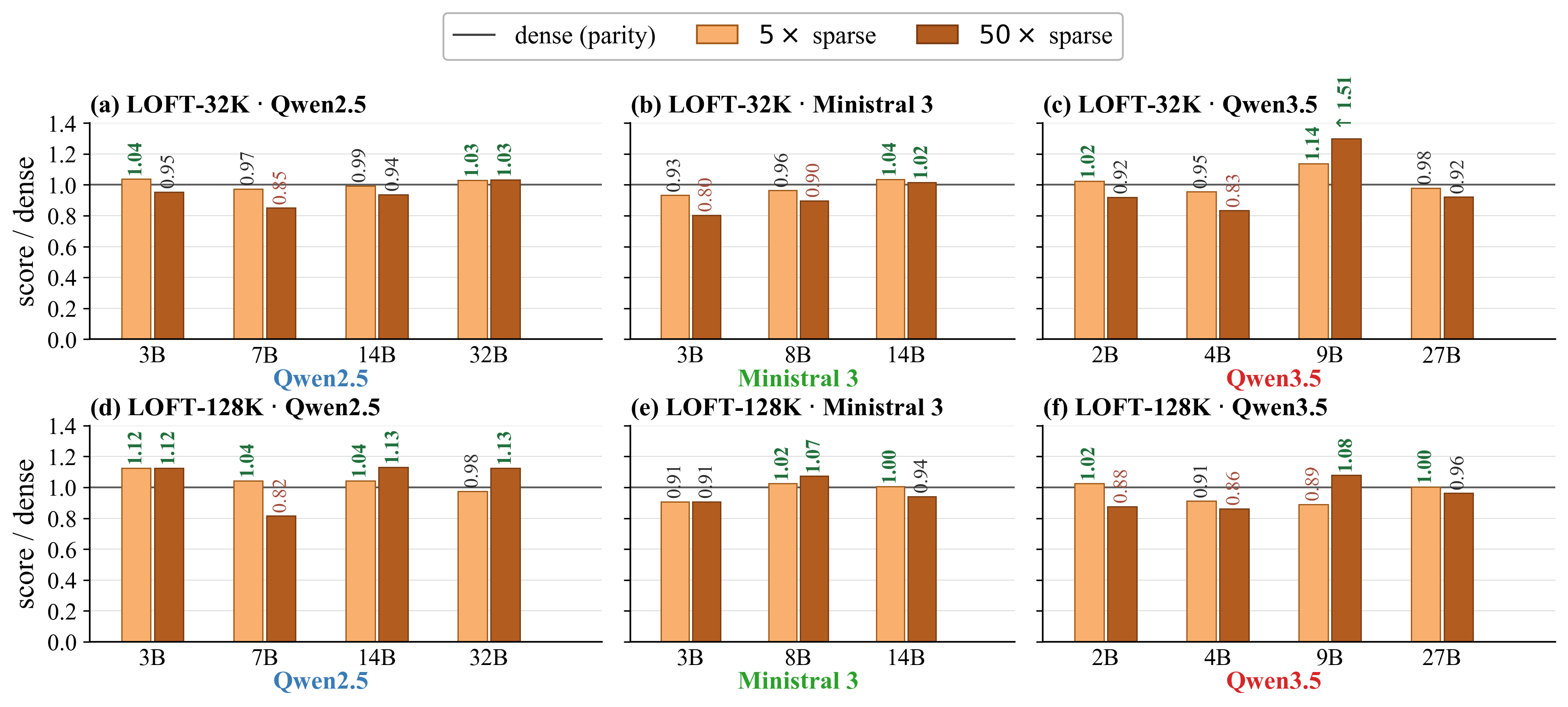}
    \caption{\textbf{LOFT subspan-EM retention under $5\times$ and $50\times$ inference-time sparsity.} Bars report relative score (sparse Subspan-EM / dense Subspan-EM); the horizontal line at $1.0$ marks dense parity. Values $\geq\!1.0$ (sparse meets or exceeds dense) are bold green; values $<\!0.90$ (over a 10\% relative drop) are muted red. Layout: rows separate context length (top: 32K, bottom: 128K); columns separate model family (Qwen2.5: 3B--32B, Ministral3: 3B--14B, Qwen3.5: 2B--27B). At $5\times$ sparsity, retention sits within $\sim$$2\%$ of dense across nearly every checkpoint; at $50\times$, mid-scale standard checkpoints (Qwen2.5-7B, Ministral-3B) suffer the largest drops while the hybrid Qwen3.5 family stays close to parity except at the smallest scales. Qwen3.5-9B at 32K shows a $1.51\times$ ratio --- a small-denominator effect on a low-scoring task --- consistent with the qualitative picture from RULER-HARD that sparsity does not degrade quality once a model is large enough.}
    \label{fig:loft}
\end{figure}

%% file: figures_arxiv/swe.tex
\begin{figure}[!t]
    \centering
    \includegraphics[width=0.95\textwidth]{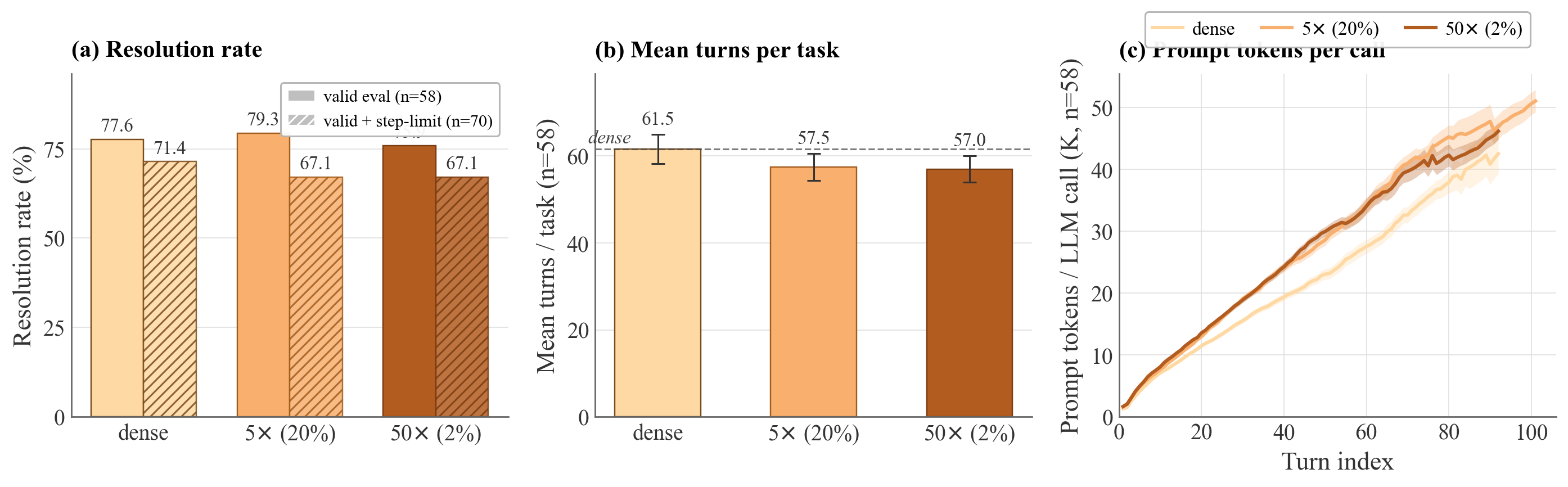}
    \caption{\textbf{SWE-Bench Django head-to-head, Qwen3.5-27B under dense, $5\times$, and $50\times$ sparsity.}
    Left: resolution rate on two nested subsets (solid: $n{=}58$ where all three produced a valid eval verdict; hatched: $n{=}70$ that also admits tasks where any config exhausted its $250$-turn budget). Sparse matches dense within $\sim$$2$ points on the strict subset; the $\sim$$4$-point gap on the broader subset is driven by sparse runs occasionally collapsing into degenerate command loops.
    Middle: mean agent turns per task on the strict $n{=}58$ subset (error bars: SEM).
    Right: mean prompt tokens per LLM call as a function of turn index, averaged across $n{=}58$ tasks that reached that turn (shaded band: $\pm$SEM; tail truncated where fewer than six tasks remain); the sparse curves track dense closely with a small ($\sim$$6\%$) per-call offset that grows with context length.
    Full subgroup table, per-task cost, and outcome composition (including the step-limit failure mode) are in Appendix~\ref{app:swe}.}
    \label{fig:swe}
\end{figure}

%% file: figures/table_sparse_backend.tex
\begin{table}[t]
\centering
\caption{\textbf{Per-query, per-head irregular sparse decode is up to $76\times$ faster than FlashInfer at extreme sparsity}
Speedup over FlashInfer~\cite{flashinfer2024} on H100 80GB HBM3 (FP16, GQA $H_q{=}32$, $H_{kv}{=}8$, $D{=}128$, page size 16, NHD, 128K context).
$S\times$ sparsity implies each query-head attends to $1/S$ fraction of total tokens. The Speed up of  $<\!1\times$ denotes overhead.
We generally break-even at $10\times$ sparsity for every batch with huge speedups of  $10$--$76\times$ in the $50$--$500\times$ regime.
Note that no block structure imposed on the sparsity.}
\label{tab:backend_speedup}
\setlength{\tabcolsep}{5pt}
\begin{tabular}{c r | r r r r r r r r}
\toprule
$B$ & FlashInfer (ms) & 2x & 4x & 10x & 20x & 50x & 100x & 200x & 500x \\
\midrule
1  & 0.19 & 0.32$\times$ & 0.63$\times$ & 1.45$\times$ & 2.58$\times$ &  5.57$\times$ & 10.25$\times$ & 11.05$\times$ & 11.14$\times$ \\
4  & 0.72 & 0.33$\times$ & 0.66$\times$ & 1.64$\times$ & 3.18$\times$ &  7.45$\times$ & 13.36$\times$ & 24.25$\times$ & 42.04$\times$ \\
8  & 1.50 & 0.38$\times$ & 0.77$\times$ & 1.90$\times$ & 3.75$\times$ &  8.88$\times$ & 16.82$\times$ & 29.64$\times$ & 76.14$\times$ \\
16 & 3.08 & 0.45$\times$ & 0.89$\times$ & 2.21$\times$ & 4.35$\times$ & 10.54$\times$ & 20.09$\times$ & 37.32$\times$ & 76.77$\times$ \\
\bottomrule
\end{tabular}
\end{table}

%% file: figures/table_double_sparsty.tex
\begin{table}[t]
\centering
\caption{\textbf{Sparse decode is net-positive with indexer cost included. We use DoubleSparsity as an example here.}
Speedup over FlashInfer on H100 (FP16, 128K, page size 16) using Double~Sparsity~\cite{yang2024posttraining} ($8\times16$-bit channels, untuned) for index selection, under MHA ($H_q{=}H_{kv}{=}32$) and GQA ($H_q{=}32, H_{kv}{=}8$).
MHA: break-even at $2\times$ sparsity with  $4.17\times$ speedup at $100\times$ sparsity.
GQA: break-even at $10\times$, with $2.81\times$ speed up at $100\times$.
A lighter indexer (HashAttention, PQCache, low-precision Double Sparsity) is expected to widen the margin. The upper limit can be seen from Table~\ref{tab:backend_speedup}}
\label{tab:double-sparsity-speedup}
\setlength{\tabcolsep}{5pt}
\begin{tabular}{c r | r r r r r r}
\toprule
$B$ & FlashInfer (ms) & 2x & 5x & 10x & 20x & 50x & 100x \\
\midrule
\multicolumn{8}{l}{\textit{GQA\quad($H_q{=}32,\ H_k{=}8$)}} \\
\midrule
1  & 0.19 & 0.28$\times$ & 0.56$\times$ & 0.83$\times$ & 1.12$\times$ & 1.46$\times$ & 1.65$\times$ \\
4  & 0.72 & 0.32$\times$ & 0.67$\times$ & 1.06$\times$ & 1.52$\times$ & 2.11$\times$ & 2.45$\times$ \\
8  & 1.50 & 0.36$\times$ & 0.75$\times$ & 1.18$\times$ & 1.68$\times$ & 2.30$\times$ & 2.66$\times$ \\
16 & 3.08 & 0.41$\times$ & 0.85$\times$ & 1.31$\times$ & 1.82$\times$ & 2.46$\times$ & 2.81$\times$ \\
\midrule
\multicolumn{8}{l}{\textit{MHA\quad($H_q{=}H_k{=}32$)}} \\
\midrule
1  &  0.70 & 0.91$\times$ & 1.62$\times$ & 2.18$\times$ & 2.68$\times$ & 3.14$\times$ & 3.37$\times$ \\
4  &  2.79 & 1.02$\times$ & 1.86$\times$ & 2.56$\times$ & 3.17$\times$ & 3.74$\times$ & 4.00$\times$ \\
8  &  5.61 & 1.12$\times$ & 2.00$\times$ & 2.71$\times$ & 3.32$\times$ & 3.86$\times$ & 4.11$\times$ \\
16 & 11.18 & 1.22$\times$ & 2.13$\times$ & 2.84$\times$ & 3.43$\times$ & 3.94$\times$ & 4.17$\times$ \\
\bottomrule
\end{tabular}
\end{table}

%% file: sections_arxiv/kernels.tex
A common argument against sparsity concerns its poor alignment with modern hardware, which has led to considerable debate over whether block sparsity is a necessary condition for practical efficiency gains. DeepSeek-V3 Attention \cite{deepseek2025v32} demonstrates that token-level fine-grained sparsity can yield meaningful efficiency improvements at both prefill and decode time: a result with implications not only for inference but also for training-time reductions on long-context workloads. We take this a step further in the context of decode-time sparsity. We show that it is possible to improve upon the state-of-the-art FlashInfer\footnote{\url{flashinfer.ai}} decoding kernels using an even finer-grained sparsity pattern: per-query, per-query-head token-level sparsity. We find this effective even in challenging settings such as Grouped Query Attention (GQA)~\cite{gqa} where number of query heads can be much larger than key-value heads.
\\
\noindent
We benchmark sparse decode backend kernel against a full dense decode baseline on an NVIDIA H100 80GB HBM3 GPU using fp16 precision, $H_{kv}=8$, $H_q=32$, head dimension $D=128$, page size 16, and NHD layout. Table~\ref{tab:backend_speedup} shows the performance of sparse backend which computes the weighted attention given sparse index and associated weights. It shows that we can leverage even this irregular sparsity. At $50$--$100\times$ sparsity, our backend delivers $5.5$--$20\times$ speedup over FlashInfer across batch sizes; at extreme $500\times$ sparsity, speedup reaches $76\times$ at large batch (Table~\ref{tab:backend_speedup}).
\\
\noindent
To include some form of indexing mechanism, we simulate an 8-channel ($16$-bit precision) Double Sparsity~\cite{yang2024posttraining} indexer and report results in Table~\ref{tab:double-sparsity-speedup}. Double Sparsity achieves up to $4.17\times$ speedup in MHA and $2.81\times$ in GQA at $100\times$ sparsity. MHA crosses break-even at $2\times$ sparsity; GQA at $10$--$20\times$. A lighter indexer (HashAttention~\cite{desai2025hashattention}, PQCache~\cite{zhang2025pqcache}, or low-precision Double Sparsity~\cite{yang2024posttraining}) is expected to widen these margins.

%% file: sections_arxiv/conclusion.tex
The AI workload landscape is rapidly shifting toward long-context understanding and long-form generation, with tasks such as repository-scale code comprehension, long-document question answering, and agentic systems becoming increasingly common. We argue that standard attention mechanisms were not designed for such extreme context lengths. In particular, attention faces a fundamental embedding bottleneck: a relatively small hidden dimension $d \ll N$ forces information from an $N$-dimensional context to collapse into a much lower-dimensional representation. Motivated by this limitation, we envision a future in which long-context LLM inference becomes entirely sparse in the context dimension.
To support this vision, we demonstrate the surprising robustness of new-generation large models to extreme sparsity, even though these models were not explicitly trained for sparse context processing. We further show that multiple forms of sparsity can already be exploited effectively on current hardware, while even greater gains may be unlocked through future hardware designs that explicitly acknowledge the inherently sparse nature of context processing.
Overall, we believe the community should treat sparsity as a central principle when designing the next generation of model architectures, inference and training systems, and the hardware platforms on which they operate.

%% file: sections_arxiv/theory_app.tex
\newpage
\section*{Appendix}
\section{Proofs}

\subsection{Proof of Theorem~\ref{thm:dense-attention-hidden-collapse}}

\begin{proof}
Consider the linear map
\(T:\mathbb{R}^N\to \mathbb{R}^d\) defined by
\(T(a):=V^\top a\).
Since \(V^\top\in\mathbb{R}^{d\times N}\), we have
\(\operatorname{rank}(T)\le d\). Let
\(
\mathcal{N}(T):=\{z\in\mathbb{R}^N:V^\top z=0\}
\)
be the null space of \(T\). By rank-nullity,
\[
\dim \mathcal{N}(T)
=
N-\operatorname{rank}(T)
\ge
N-d.
\]

We want a nonzero direction in the null space that also preserves the simplex sum constraint. Let
\(
\mathbf{1}:=(1,\dots,1)^\top\in\mathbb{R}^N
\)
and consider the zero-sum hyperplane
\(
H:=\{z\in\mathbb{R}^N:\mathbf{1}^\top z=0\}.
\)
This hyperplane has dimension \(N-1\). By the standard dimension formula for subspaces,
\[
\dim(\mathcal{N}(T)\cap H)
\ge
\dim\mathcal{N}(T)+\dim H-N.
\]
Since \(\dim\mathcal{N}(T)\ge N-d\) and \(\dim H=N-1\), we get
\[
\dim(\mathcal{N}(T)\cap H)
\ge
(N-d)+(N-1)-N
=
N-d-1>0,
\]
where the last inequality uses \(d<N-1\). Hence there exists a nonzero
\(z\in \mathcal{N}(T)\cap H\) such that
$V^\top z=0$,
and
$\mathbf{1}^\top z=0$.
%
Since \(z\neq 0\), we may rescale it as
\(
z\leftarrow \frac{z}{\|z\|_\infty},
\)
so that \(\|z\|_\infty=1\). This rescaling preserves
\(V^\top z=0\) and \(\mathbf{1}^\top z=0\).
Now, let
\(
a_0:=\frac{1}{N}\mathbf{1}
\)
be the uniform attention distribution, and fix any \(\beta\in(0,1)\) and define
\[
a:=a_0+\frac{\beta}{N}z,
\qquad
a':=a_0-\frac{\beta}{N}z.
\]
Since \(\mathbf{1}^\top z=0\), both \(a\) and \(a'\) sum to one. Moreover, since \(\|z\|_\infty=1\), each coordinate satisfies
\[
\frac{1-\beta}{N}
\le
a_i,a'_i
\le
\frac{1+\beta}{N}.
\]
Because \(\beta\in(0,1)\), all coordinates are strictly positive. Thus \(a\) and \(a'\) are valid attention distributions, and both have full support. They are distinct because \(z\neq 0\).

Finally,
\[
V^\top a
=
V^\top a_0+\frac{\beta}{N}V^\top z
=
V^\top a_0
=
V^\top a_0-\frac{\beta}{N}V^\top z
=
V^\top a'.
\]
Thus there exist two distinct full-support attention distributions \(a\) and \(a'\) such that
\[
a\neq a',
\qquad
V^\top a=V^\top a'.
\]
Therefore, the map \(a\mapsto V^\top a\) is not injective on the attention simplex when \(d<N-1\).
\end{proof}

\section{SWE-Bench Django: subgroup, failure-mode, and cost breakdown}
\label{app:swe}

This appendix reports the complete SWE-Bench Django evaluation behind the headline of Sec.~\ref{sec:emergent}: that on tasks the dense baseline can solve, sparse attention matches dense within $\sim\!2$pp, and the larger $S_0$ gap is driven by serving-stack failures rather than attention quality.
We compare three configurations on Qwen3.5-27B served via vLLM under the \texttt{mini-swe-agent} v$2.2.8$ harness (\texttt{step\_limit=250}, \texttt{cost\_limit=\$3}, $60$s per-command timeout):
\emph{dense} is full softmax ($100\%$ density);
\emph{$5\times$} is Sink($128$) $+$ Local($128$) $+$ OracleTopK with heavy fraction $0.20$ (achieved density $\sim\!22\%$; per-layer attention-output $L_2$ error $\sim\!1.3\%$ relative to full attention);
\emph{$50\times$} uses the same scaffold with heavy fraction $0.02$ ($\sim\!3.8\%$ density, $\sim\!8.8\%$ error).
All three were run on the full $114$-instance Django subset of SWE-Bench Lite. The harness graded $114 / 113 / 110$ instances, respectively; the $1$- and $4$-instance deficits trace to serving-stack exit codes documented below.

\paragraph{Outcome definitions (SWE-Bench harness).}
\texttt{resolved}: patch applied and target tests pass.
\texttt{unresolved}: patch applied, tests fail.
\texttt{empty\_patch}: the agent produced no diff (or only test-file changes, which the harness strips before applying).
\texttt{error}: a patch was produced but \texttt{git apply} rejected it (malformed hunk or wrong line numbers).

\paragraph{Exit-status definitions (mini-swe-agent terminal state).}
The harness's outcome bucket conflates several mechanisms. Inside \texttt{empty\_patch}, the agent's terminal state isolates root cause:
\texttt{Submitted} -- agent reached the submit step and emitted an empty (or test-only) diff;
\texttt{LimitsExceeded} -- hit \texttt{step\_limit=250} without submitting (model-side: agent could not converge);
\texttt{InternalServerError} -- the vLLM server crashed mid-conversation, attributable to sparse-attention-hub instability under sustained decode (server-side, sparse-specific);
\texttt{Timeout} -- LiteLLM $1800$s connection timeout against a stalled vLLM server (same root cause as \texttt{InternalServerError});
\texttt{CalledProcessError} -- the per-instance \texttt{docker run} returned $125/127$ before the agent could place its first model call (pure infra; no sparsity dependence).

\begin{table}[ht]
\centering
\caption{\textbf{Resolution rate by subgroup (resolved/total).}
$S_0$ is the unconditional union; $S_3$ is the head-to-head subset where all three configurations emitted a non-empty patch that the harness scored. The $\sim\!10$pp $S_0$ dense$\!\to\!$sparse gap collapses to $\sim\!2$pp on $S_3$ ($5\times$ in fact slightly edges dense), placing the gap on \emph{whether} the agent emits a parseable patch rather than on \emph{patch quality}.}
\label{tab:swe-subgroups}
\setlength{\tabcolsep}{6pt}
\begin{tabular}{l c c c c}
\toprule
Subgroup & $n$ & dense & $5\times$ ($\sim\!22\%$) & $50\times$ ($\sim\!3.8\%$) \\
\midrule
$S_0$ \;union                                  & 114 & $70/114 = 61.4\%$ & $60/113 = 53.1\%$ & $57/110 = 51.8\%$ \\
$S_1$ \;3-way intersection                     & 109 & $66/109 = 60.6\%$ & $57/109 = 52.3\%$ & $56/109 = 51.4\%$ \\
$S_2 = S_1 \cap$ no harness errors             & 106 & $64/106 = 60.4\%$ & $55/106 = 51.9\%$ & $56/106 = 52.8\%$ \\
$S_3 = S_2 \cap$ all emitted non-empty patches &  58 & $45/58  = 77.6\%$ & $46/58  = 79.3\%$ & $44/58  = 75.9\%$ \\
\bottomrule
\end{tabular}
\end{table}

\paragraph{Where does the $S_0$ gap come from?}
The \texttt{empty\_patch} bucket determines the $S_0$ gap, but it is a misleadingly uniform label: the $13 / 34 / 26$ \texttt{empty\_patch} cases come from very different root causes across configurations (Tab.~\ref{tab:swe-exit-status}). For dense, the dominant root cause is \texttt{CalledProcessError}: the docker container for the instance failed to launch and the agent never placed a model call, so attention played no role in the failure. For sparse, the dominant root cause is \texttt{InternalServerError} (plus \texttt{Timeout}): the vLLM server crashed or stalled mid-trajectory under sustained sparse-attention-hub decode, again with no bearing on attention quality, just on serving-stack stability. The only failure mode plausibly attributable to attention is \texttt{LimitsExceeded} -- the agent ran out of $250$ turns without converging -- and it scales modestly with sparsity ($4 / 11 / 11$), which is the residual attention-quality cost of compressing context. On the \texttt{resolved} and \texttt{unresolved} buckets every trajectory is \texttt{Submitted}: once the agent reaches submit, sparsity does not change \emph{whether} the patch passes, only \emph{which} tasks the agent reaches submit on. The three \texttt{error} entries at $50\times$ are \texttt{git apply} rejections of malformed hunks; we treat them as the same family as \texttt{empty\_patch} -- patches the harness never scored.

\begin{table}[ht]
\centering
\caption{\textbf{Per-(outcome $\times$ config) exit-status counts.}
$n$ is the size of that outcome bucket for that configuration.
\emph{Bold} entries are root causes attributable to infrastructure (docker daemon, vLLM server crash, or connection timeout) and are eligible for retry; plain entries are model-side failures (\texttt{Submitted} but tests fail, or \texttt{LimitsExceeded}).}
\label{tab:swe-exit-status}
\setlength{\tabcolsep}{5pt}
\small
\begin{tabular}{l l r l}
\toprule
config & outcome & $n$ & exit-status mix \\
\midrule
dense   & resolved    & 70 & Submitted:$70$ \\
dense   & unresolved  & 31 & Submitted:$31$ \\
dense   & empty\_patch& 13 & \textbf{CalledProcessError:$9$}, LimitsExceeded:$4$ \\
\midrule
$5\times$  & resolved    & 60 & Submitted:$60$ \\
$5\times$  & unresolved  & 19 & Submitted:$19$ \\
$5\times$  & empty\_patch& 34 & \textbf{InternalServerError:$17$}, LimitsExceeded:$11$, Submitted:$4$, \textbf{Timeout:$2$} \\
\midrule
$50\times$ & resolved    & 57 & Submitted:$57$ \\
$50\times$ & unresolved  & 24 & Submitted:$24$ \\
$50\times$ & empty\_patch& 26 & LimitsExceeded:$11$, \textbf{InternalServerError:$11$}, \textbf{Timeout:$4$} \\
$50\times$ & error       &  3 & Submitted:$3$ (\texttt{git apply} rejected) \\
\bottomrule
\end{tabular}
\end{table}

\paragraph{Counterfactual: what would a clean rerun show?}
If we retry the infrastructure-attributable failures and assume each retry resolves at the configuration's empirical $S_3$ rate ($77.6$/$79.3$/$75.9\%$), the projected post-retry rates become dense $61.4\!\to\!66.7\%$ ($\sim\!6$ of $9$ docker failures recovered), $5\times$ $53.1\!\to\!61.1\%$ ($\sim\!9$ of $19$ vLLM-side failures recovered), and $50\times$ $51.8\!\to\!59.1\%$ ($\sim\!8$ of $15$). The sparse-to-dense gap narrows by $\sim\!4$pp but a $\sim\!6$pp residual remains: this residual is genuine server-side instability of the sparse-attention-hub backend, not attention-quality loss, and is the right target for follow-up engineering rather than for re-evaluating the sparsity claim.

\paragraph{Per-task compute cost.}
Tab.~\ref{tab:swe-cost} reports mean turns and total tokens by outcome.
On \emph{resolved} tasks, sparse runs use $\sim\!15\%$ fewer turns ($67\!\to\!57\!\to\!55$) and $\sim\!15$--$19\%$ fewer total tokens ($1.34$M $\!\to\!1.14$M $\!\to\!1.08$M); on \emph{unresolved} tasks the reduction is larger ($92\!\to\!67/70$ turns; $2.66$M $\!\to\!1.56/1.62$M tokens), suggesting the sparse agent commits to or abandons a fix sooner rather than churning. The two effects compound with the per-decode kernel speedup (Tab.~\ref{tab:backend_speedup}): wall-clock savings per productive task are the product of fewer turns, fewer tokens per turn, and faster per-call attention.

\begin{table}[ht]
\centering
\caption{\textbf{Per-task compute cost by outcome.}
Mean agent turns and mean total tokens per task.
$n$ columns report instance counts as (dense / $5\times$ / $50\times$).
\texttt{empty\_patch} and \texttt{error} costs are inflated by stalled trajectories that exhaust the step budget before the harness gives up; they are not comparable to the productive outcomes.}
\label{tab:swe-cost}
\setlength{\tabcolsep}{6pt}
\begin{tabular}{l c | r r r | r r r}
\toprule
& & \multicolumn{3}{c|}{turns (mean)} & \multicolumn{3}{c}{total tokens (mean)} \\
outcome & $n$ (d/$5\times$/$50\times$) & dense & $5\times$ & $50\times$ & dense & $5\times$ & $50\times$ \\
\midrule
resolved        & 70 / 60 / 57 & $67$  & $57$  & $55$  & $1.34$M  & $1.14$M & $1.08$M \\
unresolved      & 31 / 19 / 24 & $92$  & $67$  & $70$  & $2.66$M  & $1.56$M & $1.62$M \\
empty\_patch    & 13 / 34 / 26 & $250$ & $138$ & $192$ & $14.79$M & $6.28$M & $9.44$M \\
error           &  0 /  0 /  3 & --    & --    & $55$  & --       & --      & $1.14$M \\
\bottomrule
\end{tabular}
\end{table}

\begin{figure}[ht]
    \centering
    \includegraphics[width=\linewidth]{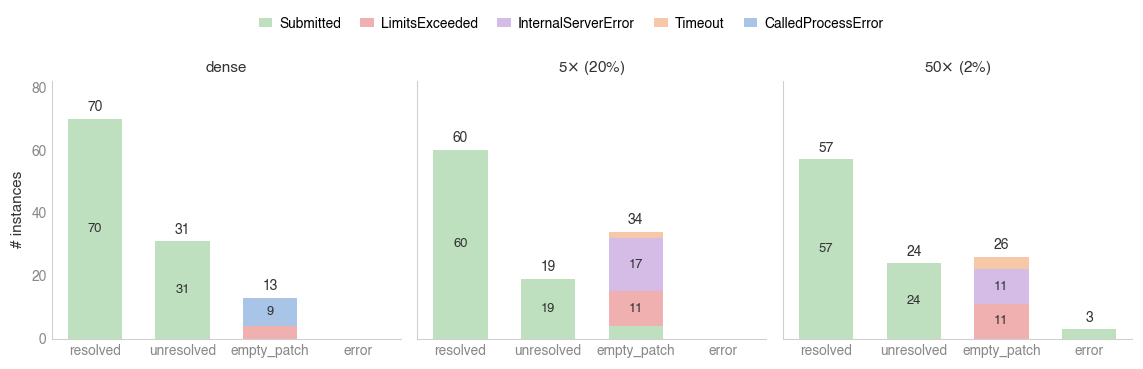}
    \caption{\textbf{Empty-patch root cause changes with attention configuration.}
    Outcome counts split by terminal exit status (visual companion to Tab.~\ref{tab:swe-exit-status}; numerical counts there).
    The takeaway is the colour composition of the \texttt{empty\_patch} bar: dense's \texttt{empty\_patch} is almost entirely docker-launch failures (blue, \texttt{CalledProcessError}); sparse's is almost entirely vLLM crashes and timeouts (purple/orange, \texttt{InternalServerError}/\texttt{Timeout}). The only stratum that grows with sparsity is \texttt{LimitsExceeded} (pink, $4 \!\to\!11 \!\to\!11$), the model-side residual.}
    \label{fig:swe-outcomes-exit}
\end{figure}

\begin{figure}[ht]
    \centering
    \includegraphics[width=0.9\linewidth]{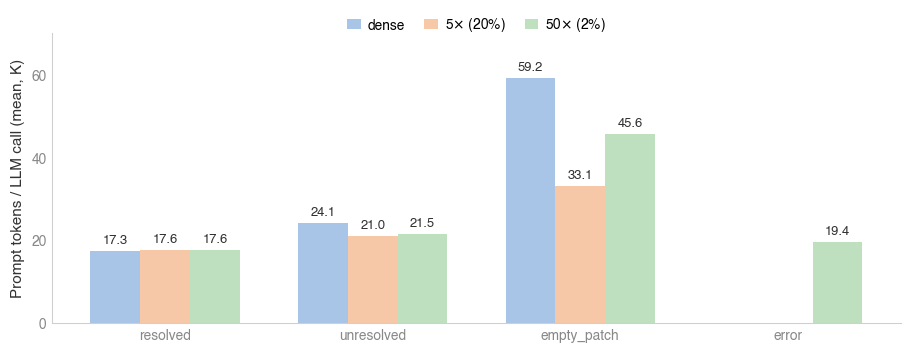}
    \caption{\textbf{Per-LLM-call prompt size is unaffected by sparsity on productive outcomes.}
    Mean prompt tokens per LLM call, by outcome and configuration.
    On \texttt{resolved} the three configurations are within $0.3$K tokens/call ($17.3 / 17.6 / 17.6$K); on \texttt{unresolved} they sit within $3$K ($24.1 / 21.0 / 21.5$K). Sparsity does not change how much context the agent maintains per call -- it changes what attention does \emph{with} that context. The large \texttt{empty\_patch} bars ($59.2 / 33.1 / 45.6$K) come from the stalled-trajectory tail (Fig.~\ref{fig:swe-outcomes-exit}) where the agent burns through the $250$-turn cap on a steadily growing prompt; the elevated tokens reflect the failure mode, not the productive cost.}
    \label{fig:swe-tokens-per-turn}
\end{figure}